\makeatletter
\@ifundefined{@parse@version@dash}{%
	\def\@parse@version#1{\@parse@version@0#1}
	\def\@parse@version@#1/#2/#3#4#5\@nil{%
		\@parse@version@dash#1-#2-#3#4\@nil}
	\def\@parse@version@dash#1-#2-#3#4#5\@nil{%
		\if\relax#2\relax\else#1\fi#2#3#4 }
}{}
\makeatother

\documentclass[%
reprint,
superscriptaddress,
amsmath,amssymb,
prl,
showkeys
]{revtex4-2}

\usepackage{color}
\usepackage{graphicx}
\usepackage{dcolumn}
\usepackage{bm}
\usepackage{hyperref}
\usepackage{amsmath,amssymb,amsthm,mathrsfs,bm,graphicx,xcolor,hyperref,microtype}
\usepackage{epstopdf}
\newcommand{\dd}{\,\mathrm d}

\newtheorem{theorem}{Theorem}

\newcommand{\RR}{\mathbb R}

\newcommand{\thv}{\bm{\theta}}
\newcommand{\uv}{\bm{u}}
\newcommand{\zv}{\bm{z}}
\newcommand{\Fv}{\bm{F}}
\newcommand{\Gv}{\bm{G}}

\begin{document}

\title{Some coarse-grain laws for macroscopic states of neural networks}

\title{Macroscopic states and dynamical coarse-graining of neural-network training}
\title{Dynamical coarse-graining of neural-network training}

\author{Lin-Juan Wang}
\email{wanglj@buaa.edu.cn}
\affiliation{School of Astronautics, Beihang University, Beijing 100191, China}
\author{Da Li}
\affiliation{School of Astronautics, Beihang University, Beijing 100191, China}
\author{Huiling Duan}
\author{Jianxiang Wang}
\affiliation{School of Mechanics and Engineering Science, Peking University, Beijing 100871, China}
\affiliation{CAPT-HEDPS, and IFSA Collaborative Innovation Center of MoE, Peking University, Beijing 100871, China}

\date{\today}

\begin{abstract}
Dynamical coarse-graining of neural networks gives macroscopic observables and enables training to be predicted, and controlled
in a state space far smaller than the parameter space.
In this paper, we first define the learning operator following the training law of neural networks and establish a descent theorem for the learning operator, which serves to determine whether the operator is sufficient to characterize the learning dynamics.
Then the exact closure theorem and the dynamic coarse-graining theorem are presented in sequence. The three theorems established in this paper provide a systematic framework for constructing and assessing dynamical coarse-grained models of neural-network training. Finally, we demonstrate the applicability of the theorems with a finite-width nonlinear ReLU network. Its dynamic coarse-grained model for a fixed pattern is exactly derived and composed of five independent moments. The model exactly predict the training outputs and evolving learning operator. The framework may also guide the development of approximate coarse-grained models for more general neural networks when exact closure is unavailable.

\end{abstract}

\keywords{Neural-network training, Dynamic coarse-graining, Macroscopic laws, Exact closure}

\maketitle


While scaling laws for artificial neural networks characterize how the model performance evolves after sufficiently long training, a central open question arises for neural networks with billions or even trillions of parameters: do there exist coarse-grained evolution laws that describe their macroscopic observables/states on a far lower-dimensional state space than the original parameter space, and predict the subsequent training dynamics? If such laws exist, are there general conditions that they must satisfy? If this is achievable, training processes could be predicted, compared, and controlled within a state space greatly smaller than the full parameter space.

There have been some theories of the dynamical reduction of neural networks in the literature. Stochastic descriptions approximate minibatch stochastic gradient descent (SGD) by diffusion processes under specified scaling assumptions, allowing the Fokker--Planck analyses of nonequilibrium fluctuations and implicit regularization~\cite{Mandt2017,Li2017,Chaudhari2018,Yang2023}. The statistical mechanics approach also supplies deterministic equations and predictive effective descriptions of training and transfer~\cite{SaadSolla1995,Goldt2020,Baglioni2024,Ingrosso2025,Aiudi2025}. Large-width limits yield measure-valued mean-field dynamics, fixed neural tangent kernel (NTK) dynamics in the appropriate regime, and feature-learning limits~\cite{Mei2018,Chizat2018,Jacot2018,Chizat2019,YangHu2021}. The dynamical mean-field theory further resolves evolving kernels and finite-width fluctuations~\cite{Bordelon2023,Bordelon2023finite}. Other reductions exploit intrinsic parametrizations, invariant structures, or fast--slow separation~\cite{Fenichel1979,Saxe2014,Lin2021}. Notably, recent research constructs architecture-dependent intrinsic gradient flows for deep networks~\cite{Marcotte2026}. Physical analogies can also provide predictive effective training models~\cite{Shi2025}.
However, these dynamical reductions are not necessarily dynamical coarse-grained approaches for neural-network training.
A basic question for dynamical coarse-graining is whether a prescribed finite-width training observation admits a trajectory-wise exact autonomous state, and whether its dimension is genuinely smaller than the microscopic model.


In this letter, we address the mathematical requirements for a dynamical coarse-graining of neural-network training. These requirements include:(i) \emph{state sufficiency}--- whether the selected sate is sufficient to characterize the learning dynamics? (ii) \emph{exact closure}---
can a closed macroscopic dynamics for the sufficient state be constructed? (iii) \emph{dynamical coarse-graining}---does the resulting closed dynamics achieve genuine compression and scale consistency? The three issues will be addressed generally, and then the developed coarse-graining equations will be tested within a finite-width nonlinear ReLU neural-network.  

\begin{figure*}[t]
\centering\includegraphics[width=1\textwidth]{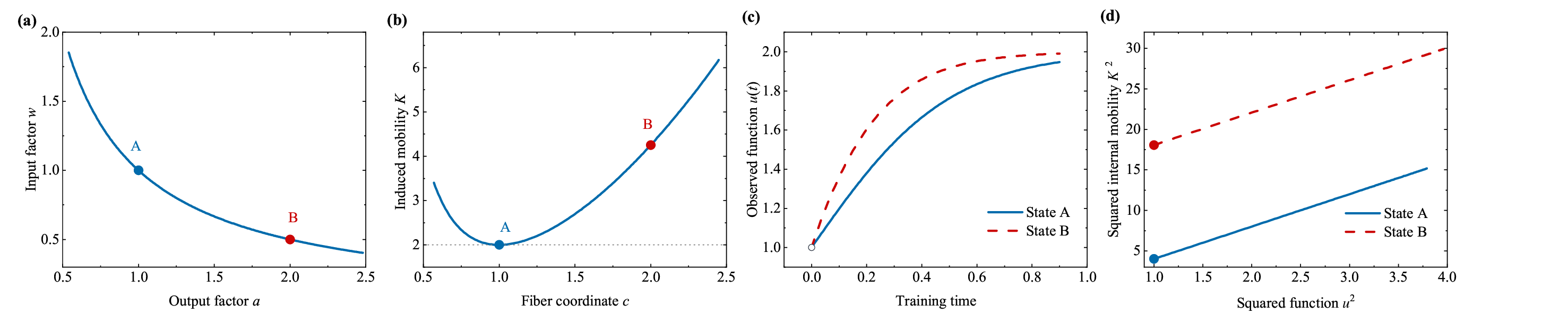}
\caption{\label{fig:fiber}(a) The fiber $aw=1$ in parameter space. (b) Learning operator $K$ varies along this fiber. (c) Output functions $u(t)$ with identical initial function values evolve differently along the training time. (d) The augmented state $(u,K)$ has an autonomous flow, and the trajectories of parameter states A and B are parallel in the $(u^2,K^2)$ plane.}  
\end{figure*}

\emph{Neural-network training and the observable equation.---}
We consider a parameterized neural network $f_{\thv}:\mathcal X\to\mathbb R$
with scalar output and prescribed input domain $\mathcal X$ (vector
outputs can be studied in the same way). We refer to its trainable parameters as the microscopic state $\thv=(\theta_1,\ldots,\theta_P)^{\mathsf T}
\in\Theta$, where $P$ is the total number of trainable scalars and
$\Theta\subset\RR^P$ is an open set of admissible parameter values.
$\thv$ denotes one particular parameter vector, and
$\Theta$ denotes the domain of $\thv$. Fix $N$ prescribed inputs
$x_1,\ldots,x_N\in\mathcal X$. Their outputs form the observation vector
$\uv=h(\thv)=(f_{\thv}(x_1),\ldots,f_{\thv}(x_N))^{\mathsf T}$.
Here $h:\Theta\to\mathcal U$ is the observation map and
$\mathcal U\subset\RR^N$ is an open domain containing the image of $\Theta$.
We assume that $h$ is continuously differentiable on the considered domain.
Its Jacobian $\mathbf J_{\thv}:=Dh(\thv)\in\RR^{N\times P}$ has entries
$(\mathbf J_{\thv})_{ji}=\partial h_j/\partial\theta_i$.
The relation $\delta\uv=\mathbf J_{\thv}\delta\thv$ is the
linearized output response to a parameter perturbation.
Let $\ell:\mathcal U\to\RR$ be the differentiable loss, and assume that
the parameter objective depends on $\thv$ through the observation map, i.e., $\mathcal L(\thv):=\ell(h(\thv))$. 
The ordinary Euclidean chain rule yields
\begin{equation}
 \nabla_{\thv}\mathcal L(\thv)
 =\mathbf J_{\thv}^{\mathsf T}\nabla_{\uv}\ell(\uv).
 \label{eq:chainrule}
\end{equation}
Choose a prescribed smooth, symmetric positive-semidefinite mobility
matrix $\mathbf M_{\thv}\in\RR^{P\times P}$, independent of the particular
loss within any task family considered below. For the continuous,
full-batch gradient-type training law, there is
$\dot\thv=-\mathbf M_{\thv}\nabla_{\thv}\mathcal L$. Substitution of
Eq.~(\ref{eq:chainrule}) into
$\dot\uv=\mathbf J_{\thv}\dot\thv$ gives the exact projected equation
\begin{equation}
 \dot\uv=-\mathbf J_{\thv}\mathbf M_{\thv}
 \mathbf J_{\thv}^{\mathsf T}\nabla_{\uv}\ell(\uv).
 \label{eq:observed}
\end{equation}

We see that the symmetric positive semidefinite matrix
$\mathbf K_{\thv}:=\mathbf J_{\thv}\mathbf M_{\thv}
 \mathbf J_{\thv}^{\mathsf T}\in\RR^{N\times N}$ plays the key role that relates the velocity $\dot\uv$ to the function $\ell(\uv)$. Thus, we define it as  the \emph{learning operator}.
In the Euclidean gradient flow, $\mathbf M_{\thv}=\mathbf I_P$, and
$\mathbf K_{\thv}=\mathbf J_{\thv}\mathbf J_{\thv}^{\mathsf T}$
is the empirical NTK~\cite{Jacot2018} . Its $(j,k)$
entry measures the inner product of the parameter sensitivities of the
outputs at $x_j$ and $x_k$. Eq.~(\ref{eq:observed}) is rewritten as
$\dot\uv=-\mathbf K_{\thv}\nabla_{\uv}\ell(\uv)$ which is not
generally a closed set of equations with respect to $\uv$, because $\mathbf K_{\thv}$ depends on which microscopic parameter realization produces the same observed output $\uv$.
For $\uv_0\in\mathcal U$, define the fiber of the observation map as
\begin{equation}
 h^{-1}(\uv_0):=\{\thv\in\Theta:h(\thv)=\uv_0\}.
 \label{eq:fiber}
\end{equation}
It collects all microscopic parameter vectors producing the same
specified $N$ outputs. The subspace $\ker\mathbf J_{\thv}$ comprises
parameter perturbations invisible to these outputs.
The closure of Eq.~(\ref{eq:observed}) depends on the property of the learning operator $\mathbf K_{\thv}$. Thus we propose the following theorem related to operator descent of $\mathbf K_{\thv}$ as follows.

\begin{theorem}[Learning operator descent]\label{thm:descent}
Let $h:\Theta\to\mathcal U$ be a smooth surjective submersion and
$\mathbf K_{\thv}:\Theta\to\operatorname{Sym}_N$ smooth, where
$\operatorname{Sym}_N$ is the space of real symmetric $N\times N$ matrices.
The following are equivalent:
\emph{(i)} there is a unique smooth
$\overline{\mathbf K}:\mathcal U\to\operatorname{Sym}_N$ with
$\mathbf K_{\thv}=\overline{\mathbf K}(h(\thv))$;
\emph{(ii)} $\mathbf K_{\thv_1}=\mathbf K_{\thv_2}$ whenever
$h(\thv_1)=h(\thv_2)$.
If every fiber is connected, these are further equivalent to
$D\mathbf K_{\thv}[\mathbf v]=\mathbf0$ for all
$\mathbf v\in\ker\mathbf J_{\thv}$.
For a family $\mathscr F$ of smooth losses whose gradients
$\{\nabla_{\uv}\ell(\uv):\ell\in\mathscr F\}$ span $\RR^N$
at every $\uv$, (i)--(ii) are equivalent to fiberwise constancy of
$-\mathbf K_{\thv}\nabla_{\uv}\ell(h(\thv))$ for every
$\ell\in\mathscr F$.
\end{theorem}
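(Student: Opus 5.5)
The proof reduces all of the equivalences to the standard fact that a smooth surjective submersion is a quotient map admitting smooth local sections. \textbf{(i)$\Rightarrow$(ii)} is immediate: if $h(\thv_1)=h(\thv_2)$, then $\mathbf K_{\thv_1}=\overline{\mathbf K}(h(\thv_1))=\overline{\mathbf K}(h(\thv_2))=\mathbf K_{\thv_2}$.

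\textbf{(ii)$\Rightarrow$(i).} Since $h$ is surjective, the rule $\overline{\mathbf K}(\uv):=\mathbf K_{\thv}$ for any $\thv\in h^{-1}(\uv)$ defines a map on all of $\mathcal U$, and it is well defined by (ii). Uniqueness is also forced by surjectivity, since $\overline{\mathbf K}$ is prescribed at every point of $\mathcal U$. For smoothness, fix $\uv_0$ and $\thv_0\in h^{-1}(\uv_0)$. By the local submersion theorem (constant-rank theorem) there is a smooth local section $s$ on a neighborhood $V$ of $\uv_0$ with $h\circ s=\mathrm{id}_V$ and $s(\uv_0)=\thv_0$. Then $\overline{\mathbf K}|_V=\mathbf K\circ s$, which is smooth as a composition of smooth maps.

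\textbf{Infinitesimal criterion.} Assume (ii) and let $\mathbf v\in\ker\mathbf J_{\thv}$. The fiber through $\thv$ is an embedded submanifold with tangent space $\ker\mathbf J_{\thv}$, so we can pick a curve $\gamma$ in the fiber with $\gamma(0)=\thv$ and $\gamma'(0)=\mathbf v$. Along $\gamma$ the map $\mathbf K$ is constant, hence $D\mathbf K_{\thv}[\mathbf v]=\mathbf 0$. Differentiating $\mathbf K=\overline{\mathbf K}\circ h$ gives the same conclusion directly. For the converse, suppose $D\mathbf K$ vanishes on $\ker\mathbf J$ and every fiber is connected. The restriction of $\mathbf K$ to a fiber $F=h^{-1}(\uv)$ then has zero differential, since $T_{\thv}F=\ker\mathbf J_{\thv}$. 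A smooth map with zero differential on a connected manifold is constant: it is locally constant, and a connected manifold is path connected. This yields (ii). I expect this to be the step needing the most care, mainly in stating that fibers of a submersion are embedded submanifolds with $T F=\ker Dh$, and in noting that the connectedness hypothesis cannot be dropped.

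\textbf{Loss-family criterion.} If (ii) holds, then on a fiber both $\mathbf K_{\thv}$ and $\nabla_{\uv}\ell(h(\thv))$ are constant, so the product $-\mathbf K_{\thv}\nabla_{\uv}\ell(h(\thv))$ is fiberwise constant for every $\ell\in\mathscr F$. Conversely, fix a fiber over $\uv$ and take $\thv_1,\thv_2$ in it. The vector $\mathbf g=\nabla_{\uv}\ell(\uv)$ is the same for both points, so the hypothesis gives $(\mathbf K_{\thv_1}-\mathbf K_{\thv_2})\mathbf g=\mathbf 0$ for every $\mathbf g$ in $\{\nabla_{\uv}\ell(\uv):\ell\in\mathscr F\}$. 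By linearity this extends to the span of these gradients, which is $\RR^N$ by assumption. Hence $\mathbf K_{\thv_1}=\mathbf K_{\thv_2}$, which is (ii).
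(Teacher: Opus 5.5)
Your proposal is correct and follows the paper's proof essentially step for step: well-definedness via (ii), smoothness via local sections of the submersion, uniqueness via surjectivity, $T_{\thv}h^{-1}(\uv)=\ker\mathbf J_{\thv}$ for the infinitesimal criterion, and the spanning argument for the loss family. The only cosmetic difference is in the connected-fiber converse. You argue that zero differential makes $\mathbf K$ locally constant on the fiber and hence constant by connectedness, whereas the paper integrates $\frac{\mathrm d}{\mathrm dt}\mathbf K_{\gamma(t)}=\mathbf 0$ along a piecewise $C^1$ path in the fiber; the two arguments are equivalent.
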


Condition (ii) states that two networks which are indistinguishable by the retained
outputs also have identical instantaneous learning operators. 
For one fixed loss, it is sufficient to require constancy of the
velocity $\mathbf K_{\thv}\nabla_{\uv}\ell(\uv)$, not of
all entries of $\mathbf K_{\thv}$. The proof of Theorem~\ref{thm:descent} is given in the Appendix.

As a special case, we take the two-weight scalar network $q=aw$ on the regular positive domain $a,w>0$, with $P=2$, $N=1$, $\thv=(a,w)^{\mathsf T}$, $\uv=h(\thv)=q$, and $\mathbf M_{\thv}=\mathbf I_2$. Thus $\mathbf J_{\thv}=(w,a)$ and $\mathbf K_{\thv}=w^2+a^2$.
In this special case, the observation vector $\uv$ and learning operator $\mathbf K_{\thv}$ are scalars, respectively denoted as $u$ and $K$.
This is the simplest factorized two-weight network, where $a$ and $w$ are scalar weights and the desired output is $2$. For $\ell(u)=\tfrac12(u-2)^2$, the chain rule gives $\dot a=-(u-2)w$ and $\dot w=-(u-2)a$. Therefore $\dot u=-(u-2)(a^2+w^2)$. Two parameter states, A: $(a,w)=(1,1)$ and B: $(2,1/2)$, have the same initial output $u=1$ but different initial learning operator, $K_A=2$ and $K_B=4.25$; consequently $\dot u_A(0)=2$ and $\dot u_B(0)=4.25$. Fig.1(a) and (b) show the same-output curve/fiber $aw=1$ and the changing learning operator $K$ along the fiber.  Its evolution, obtained by differentiating $a^2+w^2$, satisfies
\begin{equation}
\dot u=-K(u-2),\qquad\dot K=-4u(u-2).
 \label{eq:scalarcomplete}
\end{equation}
Thus the sate $(u,K)$ retains sufficient information to predict subsequent training dynamics.
This gives closure but does not realize dimensionality reduction. Both the parameters $(a,w)$ and the completed state $(u,K)$ have dimension two.
Fig.1(c) integrates the two resulting trajectories of both parameter sates A and B.
Eq.(\ref{eq:scalarcomplete}) indicates
$\frac{\rm{d}}{{\rm{d}}t}(K^2-4u^2)=0$ which means $K^2-4u^2=\rm{constant}$ for any parameter state.
Therefore, the trajectories of both parameter sates A and B are parallel in the $(u^2,K^2)$ plane as shown in Fig.1(d).
In general cases, the sate $(u,K)$ is not necessarily sufficient, and additional states is required. The theorem related to the exact closure of sates is proposed as follows.

\begin{figure*}[t]
\centering\includegraphics[width=.96\textwidth]{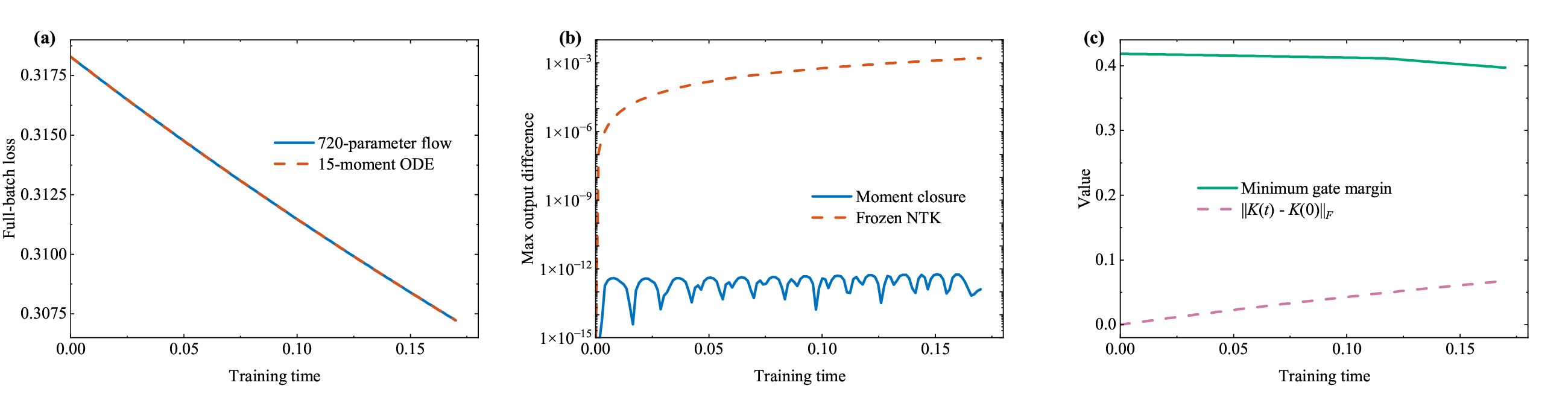}
\caption{\label{fig:relu}Exact closure of a two-layer network training with 240 hidden ReLU neurons. (a) Loss from a 720-parameter ReLU gradient flow and from its 15-moment macroscopic equation. (b) The moment macroscopic equation stays near numerical precision whereas a frozen-NTK approximation deviates. (c) The training kernel changes although the minimum distance to every gate boundary remains positive.}
\end{figure*}

\emph{Exact closure.---}
The microscopic training equation in the general form is written as
$\dot\thv=\Fv(\thv)$, where
$\Fv:\Theta\to\RR^P$ is a smooth vector-valued function. 
Let the scalar observable $\phi_i:\Theta\to\RR$, $i=1,\ldots,r$, be smooth scalar functions and
collect them into the candidate macrostate
$\zv=\Pi(\thv):=(\phi_1(\thv),\ldots,\phi_r(\thv))^{\mathsf T}$.
Here $Z\subset\RR^r$ is an open domain of admissible macrostates,
$\Pi:\Theta\to Z$ is the state projection, and
$\mathcal R:Z\to\mathcal U$ is an output readout, assumed to obey
$h=\mathcal R\circ\Pi$ whenever the original outputs must be reconstructed.
The derivative of $\phi_i$ along a microscopic trajectory is the Lie derivative $\mathcal D_{\Fv}\phi_i(\thv)
 :=D\phi_i(\thv)\Fv(\thv)$.

\begin{theorem}[Exact closure]\label{thm:completion}
Suppose $\Pi:\Theta\to Z$ is a smooth surjective submersion. The following are equivalent:
\emph{(i)} a unique smooth $\Gv:Z\to\RR^r$ exists with $ D\Pi(\thv)\Fv(\thv)=\Gv(\Pi(\thv))$;
\emph{(ii)} the vector $D\Pi(\thv)\Fv(\thv)$ is constant on
every fiber $\Pi^{-1}(\zv)$;
\emph{(iii)} for each $i=1,\ldots,r$, a smooth scalar function $G_i:Z\to\RR$ exists with $\mathcal D_{\Fv}\phi_i(\thv)=G_i(\Pi(\thv))$.
\end{theorem}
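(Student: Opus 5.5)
The plan is to prove the cycle (i)$\Rightarrow$(ii)$\Rightarrow$(i) and then (i)$\Leftrightarrow$(iii), following the same descent pattern as Theorem~\ref{thm:descent}. In fact (i)$\Leftrightarrow$(ii) is Theorem~\ref{thm:descent}'s (i)$\Leftrightarrow$(ii) applied to a vector-valued map in place of the matrix-valued $\mathbf K_{\thv}$. The object to descend is the smooth map $\bm{V}:\Theta\to\RR^r$, $\bm{V}(\thv):=D\Pi(\thv)\Fv(\thv)$. It is smooth because $\Pi$ and $\Fv$ are smooth.

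\emph{(i)$\Rightarrow$(ii).} This is immediate: if $\Pi(\thv_1)=\Pi(\thv_2)=\zv$, then $\bm{V}(\thv_1)=\Gv(\zv)=\bm{V}(\thv_2)$.

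\emph{(ii)$\Rightarrow$(i).} Fiberwise constancy together with surjectivity of $\Pi$ defines a set-theoretic map $\Gv(\zv):=\bm{V}(\thv)$ for any $\thv\in\Pi^{-1}(\zv)$. Uniqueness holds because any two such $\Gv$ agree on $\Pi(\Theta)=Z$. The only nontrivial point, and the main obstacle, is smoothness of $\Gv$.

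To handle it, I will use the local normal form of a submersion. Around any $\thv_0$ with $\Pi(\thv_0)=\zv_0$, the submersion theorem gives an open neighborhood $W\ni\zv_0$ and a smooth local section $\sigma:W\to\Theta$ with $\Pi\circ\sigma=\mathrm{id}_W$ and $\sigma(\zv_0)=\thv_0$. Concretely, one takes coordinates in which $\Pi$ is the projection onto the first $r$ coordinates and sets the remaining ones to constants. Then $\Gv|_W=\bm{V}\circ\sigma$ is a composition of smooth maps, so it is smooth. Since smoothness is local and every $\zv$ has a preimage, $\Gv$ is smooth on $Z$.

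\emph{(i)$\Leftrightarrow$(iii).} The key identity is that the $i$-th component of $D\Pi(\thv)\Fv(\thv)$ equals $D\phi_i(\thv)\Fv(\thv)=\mathcal D_{\Fv}\phi_i(\thv)$, because the rows of $D\Pi$ are the differentials $D\phi_i$. Hence (i) gives (iii) by taking $G_i$ to be the $i$-th component of $\Gv$. Conversely, given smooth scalar functions $G_i$ as in (iii), the map $\Gv:=(G_1,\ldots,G_r)^{\mathsf T}$ is smooth and satisfies the relation in (i). Its uniqueness follows from surjectivity exactly as above.

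As a remark, the closed macroscopic dynamics $\dot\zv=\Gv(\zv)$ then holds along every microscopic trajectory by the chain rule, $\dot\zv=D\Pi\,\dot\thv$. This explains the interpretation of the theorem, although it is not needed for the equivalences themselves.
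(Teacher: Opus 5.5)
Your proposal is correct and follows the paper's proof essentially step for step. The paper also descends the map $D\Pi(\thv)\Fv(\thv)$ through local smooth sections of the submersion, gets uniqueness from surjectivity, and proves (i)$\Leftrightarrow$(iii) from the componentwise identity $[D\Pi\,\Fv]_i=\mathcal D_{\Fv}\phi_i$; it even closes with the same remark that $\dot\zv=\Gv(\zv)$ holds along microscopic trajectories.
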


The proof of Theorem~\ref{thm:completion} is given in the Appendix. $\Gv=(G_1,\ldots,G_r)^{\mathsf T}$ is an ordinary vector in $\RR^r$, not a new state space.
Take a two-layer network with $m$ hidden ReLU neurons and scalar input $x$. In this example $f_m(x)$ abbreviates the general network notation $f_{\thv}(x)$, with
\begin{equation}
 f_m(x)=\frac1m\sum_{i=1}^{m}a_i(w_ix+b_i)_+,
 \quad (r)_+:=\max\{r,0\},
 \label{eq:reludef}
\end{equation}
where $a_i$ is the output weight, $w_i$ the input weight, and $b_i$ the bias of neuron $i$; the microscopic state therefore has $3m$ parameters. On $N$ prescribed input--target pairs $(x_j,y_j)$, let
\begin{equation}
 e_j:=f_m(x_j)-y_j,\quad
 \ell=\tfrac12\sum_{j=1}^{N}e_j^2,\quad
 \dot\thv=-m\nabla_{\thv}\ell.
 \label{eq:reluobjective}
\end{equation}
This convention means $\mathbf M_{\thv}=m\mathbf I_{3m}$ and thus $\mathbf K_{\thv}=m\mathbf J_{\thv}\mathbf J_{\thv}^{\mathsf T}$. The preactivation $r_{ij}:=w_ix_j+b_i$ determines whether neuron $i$ contributes at sample $j$; define the activation gate $s_{ij}:=\mathbf1_{\{r_{ij}>0\}}\in\{0,1\}$. Within a time interval where no $r_{ij}$ crosses zero, neurons sharing the same gate vector $(s_{i1},\ldots,s_{iN})$ form a pattern $\alpha$, with index set $I_\alpha$ and common indicators $s_{\alpha j}$. 
Define
\begin{equation}
 p_\alpha:=-\sum_j e_js_{\alpha j}x_j,\qquad
 q_\alpha:=-\sum_j e_js_{\alpha j}.
 \label{eq:pqforces}
\end{equation}
Within each group, $\dot a_i=p_\alpha w_i+q_\alpha b_i$, $\dot w_i=a_i p_\alpha$, and $\dot b_i=a_i q_\alpha$ for every $i\in I_\alpha$.
Using $\langle g\rangle_\alpha:=m^{-1}\sum_{i\in I_\alpha}g_i$, retain five pattern moments
\begin{equation}
\begin{aligned}
 P_\alpha&=\langle aw\rangle_\alpha,& Q_\alpha&=\langle ab\rangle_\alpha,\\
 U_\alpha&=\langle a^2+w^2\rangle_\alpha,&
 V_\alpha&=\langle a^2+b^2\rangle_\alpha,\\
 C_\alpha&=\langle wb\rangle_\alpha.&&
\end{aligned}
\label{eq:five}
\end{equation}
Differentiation gives
\begin{equation}\label{EQODE}
\begin{aligned}
\dot P_\alpha&=U_\alpha p_\alpha+C_\alpha q_\alpha,&
\dot Q_\alpha&=C_\alpha p_\alpha+V_\alpha q_\alpha,\\
\dot U_\alpha&=4P_\alpha p_\alpha+2Q_\alpha q_\alpha,&
\dot V_\alpha&=2P_\alpha p_\alpha+4Q_\alpha q_\alpha,\\
\dot C_\alpha&=P_\alpha q_\alpha+Q_\alpha p_\alpha,&&
\end{aligned}
\end{equation}
The output and training operator are functions of these moments:
\begin{equation}
\begin{split}
 f_m(x_j)&=\sum_\alpha s_{\alpha j}(P_\alpha x_j+Q_\alpha),\\
 (\mathbf K)_{jk}&=\sum_\alpha s_{\alpha j}s_{\alpha k}
 \{U_\alpha x_jx_k+C_\alpha(x_j+x_k)+V_\alpha\}.
\end{split}
\end{equation}
Eq.~(\ref{EQODE}) is a closed nonlinear ODE with five scalars $(P_{\alpha},Q_{\alpha},U_{\alpha},V_{\alpha},C_{\alpha})$, which is valid for every fixed gate pattern. The same five moments work for fixed slope piecewise homogeneous-linear activations (such as leaky ReLU); general piecewise-affine activations admit an eight-moment completion per pattern. No generic five moment closure is asserted for smooth activations such
as GELU or SiLU.

For Fig.~\ref{fig:relu}, $m=240$, $N=3$, $\bm x=(-1,0,1)^{\mathsf T}$, and $\bm y=(0.65,-0.35,0.55)^{\mathsf T}$,
with three occupied patterns. The full 720-dimensional parameter flow and its 15-dimensional macroscopic sate ODE are independently integrated on $0\le t\le0.17$ from matching initial moments.  Fig.~\ref{fig:relu}(a) compares the training losses predicted by the 15-moment macroscopic equation with those obtained from the direct 720-dimensional ReLU gradient flow, showing agreement within numerical accuracy.
Fig.~\ref{fig:relu}(b) plots the maximum output error from our moment closure and the frozen NTK $\dot\uv=-\mathbf K(0)(\uv-\bm y)$.
The moment closure achieves an error of approximately $10^{-13}$, whereas the error of the frozen-NTK  increases over the course of training.
The green line in Fig.~\ref{fig:relu}(c) verifies that the minimum gate margin $\min_{ij}|r_{ij}|$ stays positive, which makes sure the moment closure is within a valid range and no gate event occurs.
As shown by the red curve in Fig.~\ref{fig:relu}(c), the instantaneous learning operator deviates from its initial value $K(0)$ during training, which explains the discrepancy between the frozen-NTK approximation and the direct parameter training dynamics.

\emph{Dynamical coarse-graining.---}
Exact closure is only the first requirement for coarse-graining, and the dynamics may still involve as many state
variables as the original parameter model. We need to further discuss the issue of coarse-grained dimensionality reduction.
Let $P_m=\dim\Theta_m$ denote the microscopic parameter count for a width-$m$ family, distinct from the scalar moment $P_\alpha$. A reduced state must remain sufficiently small as $P_m\to\infty$ and its evolution should be stable under changes of width. These are additional conditions beyond exact closure.

\begin{theorem}[Dynamical coarse-graining and observational rank]\label{thm:scale}
\emph{(a) Scale stability.} Let $P_m\to\infty$ and fix $r\ge1$.
Suppose smooth state projections
$\Pi_m:\Theta_m\subset\RR^{P_m}\to Z\subset\RR^r$
yield exact reduced ODEs
$\dot\zv_m=\Gv_m(\zv_m)$, where
$\Gv_m,\Gv:Z\to\RR^r$. Let
$\zv_m:[0,T]\to Z$ and $\zv:[0,T]\to Z$ be respective
solutions. Suppose a common set $\mathcal C\subset Z$ contains all
these trajectories and, for $\mathbf a,\mathbf b\in\mathcal C$,
$\|\Gv_m(\mathbf a)-\Gv_m(\mathbf b)\|
\le L\|\mathbf a-\mathbf b\|$, with $L$ independent of $m$.
Set $\delta_m:=\|\zv_m(0)-\zv(0)\|$ and
$\epsilon_m:=\sup_{\mathbf a\in\mathcal C}
\|\Gv_m(\mathbf a)-\Gv(\mathbf a)\|$.
If $\delta_m,\epsilon_m\to0$, then
\begin{equation}
\sup_{0\le t\le T}\|\zv_m(t)-\zv(t)\|
\le e^{LT}(\delta_m+T\epsilon_m)\to0,
\quad \frac r{P_m}\to0.
\label{eq:coarse}
\end{equation}
\emph{(b) Observation-rank obstruction.} Let
$\mathcal O_m:\Theta_m\to\RR^{d_m}$ be smooth. If
$\mathcal O_m=\mathcal R_m\circ\widehat\Pi_m$
on an open domain, with smooth maps
$\widehat\Pi_m:\Theta_m\to\widehat Z_m\subset\RR^{r_m}$
and $\mathcal R_m:\widehat Z_m\to\RR^{d_m}$, then
\begin{equation}
\operatorname{rank}D\mathcal O_m(\thv)\le r_m
\quad(\thv\in\Theta_m).
\label{eq:rank}
\end{equation}
\end{theorem}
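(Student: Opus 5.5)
Part (a) is a Grönwall estimate, and part (b) follows from the chain rule together with a dimension count for a composition of linear maps; I treat them separately.

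For (a), set $\mathbf e(t):=\zv_m(t)-\zv(t)$. Both trajectories lie in $\mathcal C$, so I write the integral form
\begin{equation*}
\mathbf e(t)=\mathbf e(0)+\int_0^t\bigl[\Gv_m(\zv_m(s))-\Gv_m(\zv(s))\bigr]\dd s+\int_0^t\bigl[\Gv_m(\zv(s))-\Gv(\zv(s))\bigr]\dd s .
\end{equation*}
The first integrand is bounded by $L\|\mathbf e(s)\|$ using the uniform Lipschitz hypothesis on $\mathcal C$. The second is bounded by $\epsilon_m$ because $\zv(s)\in\mathcal C$. Hence $\|\mathbf e(t)\|\le\delta_m+t\epsilon_m+L\int_0^t\|\mathbf e(s)\|\dd s$.

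The integral Grönwall inequality, with the nondecreasing forcing term $\delta_m+t\epsilon_m\le\delta_m+T\epsilon_m$, gives $\|\mathbf e(t)\|\le(\delta_m+T\epsilon_m)e^{Lt}$. Taking the supremum over $[0,T]$ yields Eq.~(\ref{eq:coarse}). Since $L$ and $T$ do not depend on $m$, the bound tends to $0$ when $\delta_m,\epsilon_m\to0$. The statement $r/P_m\to0$ holds directly because $r$ is fixed and $P_m\to\infty$.

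The only point needing care is that the Lipschitz bound is assumed only on $\mathcal C$, not on all of $Z$. The split above evaluates $\Gv_m$ only at points of the two trajectories, which lie in $\mathcal C$, so this suffices. It also matters that the comparison term $\Gv_m-\Gv$ is evaluated along $\zv$, not along $\zv_m$; both lie in $\mathcal C$, so either choice works.

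For (b), the chain rule on the open domain gives $D\mathcal O_m(\thv)=D\mathcal R_m(\widehat\Pi_m(\thv))\,D\widehat\Pi_m(\thv)$. This is a $d_m\times P_m$ matrix that factors through $\RR^{r_m}$. The rank of a product is at most the rank of each factor, and $D\mathcal R_m$ is $d_m\times r_m$, so its rank is at most $r_m$. This gives Eq.~(\ref{eq:rank}) at every $\thv$ in the domain where the factorization holds.

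In short, the argument is the standard Grönwall estimate plus a linear-algebra rank bound, and nothing from Theorems~\ref{thm:descent}--\ref{thm:completion} is needed. The only subtlety is reading the claim in (b) as restricted to the open domain on which the factorization is assumed.
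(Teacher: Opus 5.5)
Your proposal is correct and matches the paper's proof essentially step for step. For (a) it uses the same insertion of $\Gv_m(\zv(s))$ into the integral form, the same Lipschitz and $\epsilon_m$ bounds, and the same Gr\"onwall step; for (b) it uses the chain-rule factorization through $\RR^{r_m}$, and bounding the rank by the $d_m\times r_m$ factor $D\mathcal R_m$ rather than the paper's $r_m\times P_m$ factor $D\widehat\Pi_m$ makes no difference. One small caveat on your side remark: evaluating $\Gv_m-\Gv$ along $\zv_m$ instead would require $\Gv$ itself to be $L$-Lipschitz on $\mathcal C$, which is not assumed but does follow from the uniform Lipschitz bound together with $\epsilon_m\to0$, so the split you actually chose is the one that uses the hypotheses directly.
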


In part (a), $\zv_m(t)$ is the reduced trajectory of a width-$m$ model, $\zv(t)$ its proposed limit, $\Gv_m$ and $\Gv$ their reduced vector fields, $L$ a common Lipschitz constant, $T$ a fixed prediction horizon, $\delta_m$ the initial-state discrepancy and $\epsilon_m$ the largest vector-field discrepancy over the region visited. The bound follows by comparing the two integral ODEs and applying Gr\"onwall's inequality; it quantifies how errors accumulate rather than merely asserting convergence. In part (b), $\mathcal O_m$ is the requested observation map, $\widehat\Pi_m$ is the candidate macrostate, and $\mathcal R_m$ reconstructs the observation from it. Differentiation gives $D\mathcal O_m=D\mathcal R_m\,D\widehat\Pi_m$; hence the number $r_m$ of state coordinates cannot be smaller than the number of independent observation sensitivities, $\operatorname{rank}D\mathcal O_m$.

For the fixed-pattern ReLU model, the reduced vector field is the same for every width, so $\epsilon_m=0$ in part (a). Under independent, uniformly bounded neuron initializations within each pattern, each retained moment is an average of independent contributions. Consequently its variance scales as $1/m$, giving a typical $m^{-1/2}$ initial error; the ODE stability estimate propagates this error over a finite time interval. Fig.~\ref{fig:width} uses widths $m=60,120,240,480,960,1920$ and 36 seeded initializations per width, with the same three training inputs and a time horizon $T=0.17$. Fig.~\ref{fig:width}(a) overlays representative outputs with the deterministic population-moment solution. Fig.~\ref{fig:width}(b) and (c) plot the ensemble-mean initial moment discrepancy and the largest-in-time output discrepancy. The fitted slopes, about $-0.49$ and $-0.50$, are consistent with the $-1/2$ prediction.

\begin{figure*}[t]
\centering\includegraphics[width=.96\textwidth]{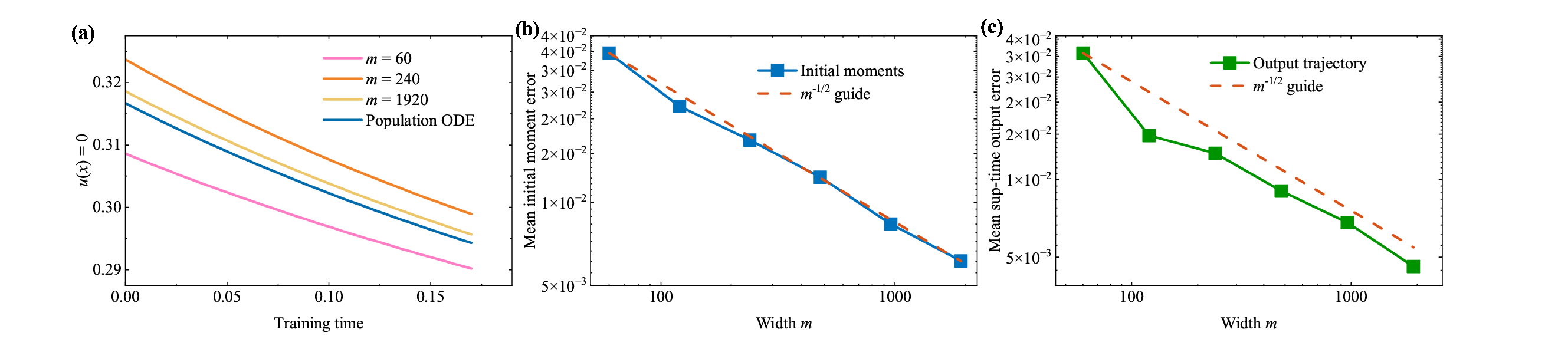}
\caption{\label{fig:width}Conditional width consistency of the pattern-moment coarse-graining. (a) Representative output trajectories at different widths approach the deterministic population-moment flow. (b) Initial-state error and (c) sup-time output error both exhibit a width-scaling compatible with $m^{-1/2}$.}
\end{figure*}

The preceding 15 moments determine outputs only on the specified training inputs; they need not determine the network's full input--output function. To require exact reconstruction for every real input $x$, restrict to a regular subfamily with $w_i>0$, pairwise distinct ReLU kink knots $t_i=-b_i/w_i$, and nonzero slope jumps $\gamma_i=a_iw_i/m$. The same network is then a piecewise-linear spline:
\begin{equation}
 f_{\thv}(x)=\sum_{i=1}^{m}\gamma_i(x-t_i)_+
 \label{eq:spline}
\end{equation}
For each neuron, varying $\gamma_i$ changes $f$ by the ramp $(x-t_i)_+$, while varying $t_i$ changes it in the independent (piecewise constant) direction $-\gamma_i\mathbf1_{\{x>t_i\}}$ away from the knot. These $2m$ perturbations are linearly independent for distinct knots and nonzero jumps; equivalently, finitely many suitably placed output evaluations have a Jacobian of rank $2m$. Therefore the observation-rank bound forces any smooth, exact, whole-function macrostate to satisfy $\dim Z_m\ge2m$ and $\dim Z_m/(3m)\ge2/3$.

\emph{Conclusions.---}
We have established a framework for dynamical coarse-graining of neural-network training based on the learning operator and three theorems concerning operator descent, exact closure, and dynamical coarse-graining. For a finite-width nonlinear ReLU network, we derived an exact macroscopic system consisting of five moments per fixed activation pattern. Numerical results show that 15 moments for three kinds of activation patterns reproduce the training outputs and evolving learning operator of a 720-parameter network within numerical accuracy, with substantially higher accuracy than the frozen-NTK approximation. The results demonstrate the feasibility of macroscopic descriptions of neural-network training under specified conditions and provide a basis for investigating coarse-grained dynamics in more general networks.

\begin{acknowledgments}
The work is supported by the National Natural Science Foundation of China (Grant No. 12588301)
\end{acknowledgments}

\appendix
\section{Appendix: Proofs of the three theorems}

\subsection{Proof of Theorem 1}
\begin{proof}
If $\mathbf K_{\thv}=\overline{\mathbf K}\circ h$,
then $\mathbf K_{\thv_1}=\mathbf K_{\thv_2}$ whenever
$h(\thv_1)=h(\thv_2)$, proving (i)$\Rightarrow$(ii).
Conversely, assume (ii). Since $h$ is surjective, each
$\uv\in\mathcal U$ has a preimage $\thv_{\uv}\in\Theta$.
Set
\begin{equation}
 \overline{\mathbf K}(\uv):=\mathbf K_{\thv_{\uv}}.
\end{equation}
Fiberwise constancy guarantees that this is independent of the
representative. Fix $\uv_0\in\mathcal U$.
There is a local smooth section
$s:W\to\Theta$ through a preimage of $\uv_0$.
For $\uv\in W$,
\begin{equation}
 \overline{\mathbf K}(\uv)=\mathbf K_{s(\uv)},
\end{equation}
so the descended map is smooth on $W$. Such neighborhoods cover
$\mathcal U$; the local expressions coincide where their domains
overlap because the definition was independent of representative.
Thus $\overline{\mathbf K}$ is globally smooth.
Surjectivity also implies uniqueness, since any two descended maps
must agree when evaluated at $h(\thv)$ for some $\thv$.

If (i) holds and $\bm v\in\ker Dh(\thv)$, the chain rule gives
\begin{equation}
 D\mathbf K_{\thv}[\bm v]
 =D\overline{\mathbf K}(h(\thv))[Dh(\thv)\bm v]=0.
\end{equation}
For the converse assume the vertical derivative vanishes at every
$\thv$. Since $h$ is a submersion, each fiber is a smooth
embedded submanifold of $\Theta$ with tangent space
$T_{\thv}h^{-1}(\uv)=\ker Dh(\thv)$.
If the fiber is connected, it is piecewise $C^1$ path connected.
Let $\gamma:[0,1]\to h^{-1}(\uv)$ be such a path joining
$\thv_1$ to $\thv_2$. At each point of differentiability,
$Dh(\gamma(t))\dot\gamma(t)=0$, hence
\begin{equation}
 \frac{\dd}{\dd t}\mathbf K_{\gamma(t)}
 =D\mathbf K_{\gamma(t)}[\dot\gamma(t)]=0.
\end{equation}
Integration on each smooth path segment gives
$\mathbf K_{\thv_1}=\mathbf K_{\thv_2}$.
By the already proved equivalence, (i) follows.
The connectedness hypothesis cannot be omitted from this
infinitesimal implication: vanishing tangential derivative controls
only the individual connected components of a fiber.

Let $\mathscr F$ be a family of smooth losses whose gradients span
$\RR^N$ at each $\uv$. If (i) holds, the projected velocity for
$\ell\in\mathscr F$ is
$-\overline{\mathbf K}(\uv)\nabla_{\uv}\ell(\uv)$,
which is the same at every microscopic point over $\uv$.
Conversely, suppose all these velocities agree for
$\thv_1,\thv_2\in h^{-1}(\uv)$. Then
\begin{equation}
 [\mathbf K_{\thv_1}-\mathbf K_{\thv_2}]
 \nabla_{\uv}\ell(\uv)=\mathbf0
 \quad(\ell\in\mathscr F).
\end{equation}
A linear map that vanishes on a spanning subset of $\RR^N$
is zero; thus $\mathbf K_{\thv_1}=\mathbf K_{\thv_2}$.
The equivalence already proved now yields (i).
\end{proof}

\subsection{Proof of Theorem 2}
\begin{proof}
Let $\mathbf W:\Theta\to\RR^r$ be the smooth map
$\mathbf W(\thv):=D\Pi(\thv)\Fv(\thv)$.
If $\mathbf W=\Gv\circ\Pi$, then for any
$\thv_1,\thv_2$ in the same fiber of $\Pi$,
$\mathbf W(\thv_1)=\mathbf W(\thv_2)$.
Conversely, assume $\mathbf W$ is constant on each fiber.
The formula
\begin{equation}
 \Gv(\zv):=\mathbf W(\thv),\qquad
 \text{for any }\thv\in\Pi^{-1}(\zv),
\end{equation}
is therefore well-defined. The surjective submersion $\Pi$ yields smooth local sections
$s:W\to\Theta$, so on each such neighborhood,
$\Gv|_W=\mathbf W\circ s$ is smooth. The local expressions
agree on intersections, proving that $\Gv$ is globally smooth.
Uniqueness follows from surjectivity of $\Pi$.

For every $i=1,\ldots,r$,
\begin{equation}
 [D\Pi(\thv)\Fv(\thv)]_i
 =D\phi_i(\thv)\Fv(\thv)
 =\mathcal D_{\Fv}\phi_i(\thv).
\end{equation}
Thus the vector relation $\mathbf W=\Gv\circ\Pi$ is equivalent
to the $r$ scalar equations
$\mathcal D_{\Fv}\phi_i=G_i\circ\Pi$.
This proves (i)$\Leftrightarrow$(iii) after the equivalence
(i)$\Leftrightarrow$(ii).

Let $\thv(t)$ solve
$\dot\thv=\Fv(\thv)$ with initial value $\thv_0$ and remain in
$\Theta$ for $t$ in an interval $I$ containing $0$.
The curve $\zv(t):=\Pi(\thv(t))$ is differentiable and
\begin{equation}
 \dot\zv(t)=D\Pi(\thv(t))\dot\thv(t)
 =\Gv(\Pi(\thv(t)))=\Gv(\zv(t)).
\end{equation}
Its initial value is $\zv(0)=\Pi(\thv_0)$.
Since $\Gv$ is smooth, local uniqueness for ODEs identifies this
curve with the reduced solution from that initial state.
Writing $\Phi_t$ for the microscopic solution map and
$\Psi_t$ for the reduced solution map therefore gives
\begin{equation}
 \Pi(\Phi_t(\thv_0))=\Psi_t(\Pi(\thv_0))
\end{equation}
whenever both sides are defined. No global completeness or
invariance beyond the common existence interval is assumed.
\end{proof}

\subsection{Proof of Theorem 3}
\begin{proof}
For each width $m$, let $\zv_m(t)$ solve
$\dot\zv_m=\Gv_m(\zv_m)$, and let $\zv(t)$ solve
$\dot\zv=\Gv(\zv)$, with both paths in the stated common region on
$[0,T]$. Integrating the two equations and inserting
$\Gv_m(\zv(t))$ yields
\begin{align}
 \zv_m(t)-\zv(t)
 ={}&\zv_m(0)-\zv(0)\\
 &+\int_0^t[\Gv_m(\zv_m(s))-\Gv_m(\zv(s))]\dd s\\
 &+\int_0^t[\Gv_m(\zv(s))-\Gv(\zv(s))]\dd s.
\end{align}
If all $\Gv_m$ are $L$-Lipschitz on a region containing both
trajectories and
$\sup\|\Gv_m-\Gv\|\le\epsilon_m$, the norm satisfies
\begin{equation}
 \|\zv_m(t)-\zv(t)\|
 \le\delta_m+L\int_0^t\|\zv_m(s)-\zv(s)\|\dd s+t\epsilon_m.
\end{equation}
Gr\"onwall's inequality, first with
$t\epsilon_m\le T\epsilon_m$ and then uniformly in $t\in[0,T]$,
gives
\begin{equation}
 \sup_{0\le t\le T}\|\zv_m(t)-\zv(t)\|
 \le e^{LT}(\delta_m+T\epsilon_m).
\end{equation}
The assumptions $\delta_m\to0$, $\epsilon_m\to0$, and
$P_m\to\infty$ with fixed $r$ imply the claimed trajectory
convergence and $r/P_m\to0$. No assertion of global gate preservation
or of convergence of the microscopic parameter trajectories is made.
\end{proof}

\begin{proof}[Proof of Theorem 3(b)]
Let $\mathcal O_m:\Theta_m\to\RR^{d_m}$ factor smoothly through
$\widehat\Pi_m:\Theta_m\to\widehat Z_m\subset\RR^{r_m}$:
$\mathcal O_m=\mathcal R_m\circ\widehat\Pi_m$.
At each $\thv\in\Theta_m$ the chain rule yields
\begin{equation}
 D\mathcal O_m(\thv)
 =D\mathcal R_m(\widehat\Pi_m(\thv))
 D\widehat\Pi_m(\thv).
\end{equation}
Since $D\widehat\Pi_m(\thv)$ is an $r_m\times P_m$ matrix,
\begin{equation}
 \operatorname{rank}D\mathcal O_m(\thv)
 \le\operatorname{rank}D\widehat\Pi_m(\thv)
 \le r_m.
\end{equation}
This is the desired pointwise bound. In particular, when a regular
family has rank at least $cP_m$ with $c>0$, exact smooth state
factorization forces $r_m/P_m\ge c$, excluding asymptotically
vanishing relative dimension on that family.
\end{proof}

\end{document}